\documentclass[twoside,lettersize,journal]{IEEEtran}
\usepackage{array}
\usepackage{textcomp}
\usepackage{stfloats}
\usepackage{url}
\usepackage{verbatim}
\hyphenation{op-tical net-works semi-conduc-tor IEEE-Xplore}
\def\BibTeX{{\rm B\kern-.05em{\sc i\kern-.025em b}\kern-.08em
    T\kern-.1667em\lower.7ex\hbox{E}\kern-.125emX}}
\usepackage{balance}

% \markboth{IEEE ROBOTICS AND AUTOMATION LETTERS. PREPRINT VERSION. ACCEPTED
% JANUARY, 2022}%
% {CHANDRA ET AL.: Game-Theoretic Multi-Agent Planning with Human Drivers}

\pagenumbering{gobble}

\newcommand{\model}{\textsc{GamePlan}}

% \newcommand{\sota}{state-of-the-art}

         % = `top' strut
   % = `bottom' strut

% \newcommand{\cost}{\bigO{\vts{\delta} \vts{\mc{L}_{i_{\vts{\mc{I}_{\delta}}}}}^2 }}
\makeatletter
\newcommand\footnoteref[1]{\protected@xdef\@thefnmark{\ref{#1}}\@footnotemark}
\makeatother

\newcommand{\shorteq}{%
  \settowidth{\@tempdima}{-}% Width of hyphen
  \resizebox{\@tempdima}{\height}{=}%
}

\usepackage{amssymb,fge}

\usepackage[table]{xcolor}
\definecolor{sg}{HTML}{00ff7f}
\definecolor{lb}{HTML}{b0f5ef}
\definecolor{lg}{HTML}{9bfaa8}
% \DeclareRobustCommand{\hllg}[1]{{\sethlcolor{lg}\hl{#1}}}
% \DeclareRobustCommand{\hlsg}[1]{{\sethlcolor{sg}\hl{#1}}}
% \DeclareRobustCommand{\hllb}[1]{{\sethlcolor{lb}\hl{#1}}}
\DeclareRobustCommand{\hllg}[1]{#1}

\DeclareRobustCommand{\hllb}[1]{#1}
% \newcolumntype{L}[1]{>{\raggedright\let\newline\\\arraybackslash\hspace{0pt}}m{#1}}
% \newcolumntype{C}[1]{>{\centering\let\newline\\\arraybackslash\hspace{0pt}}m{#1}}
% \newcolumntype{R}[1]{>{\raggedleft\let\newline\\\arraybackslash\hspace{0pt}}m{#1}}
   % = `bottom' strut
         % = `top' strut
\usepackage{amsmath, amssymb}
\usepackage{pifont}% http://ctan.org/pkg/pifont
\newcommand{\cmark}{\ding{51}}%
\newcommand{\xmark}{\ding{55}}%
\usepackage{subcaption}
\usepackage[utf8]{inputenc}
\usepackage[english]{babel}
\usepackage[table]{xcolor}
\usepackage[linesnumbered,ruled,vlined]{algorithm2e}
\usepackage[font=small,belowskip=-1.5pt]{caption} % This causes problems while compiling to latex    
\usepackage{anyfontsize}

\usepackage{array}
\usepackage{graphicx}
\usepackage{amsfonts}
\usepackage[11pt]{moresize}
\usepackage{bm}
\usepackage{soul}
\usepackage{hhline}
\usepackage{multirow, makecell}
\usepackage{float}
\usepackage{booktabs,wrapfig}

\usepackage{amsthm}
\usepackage{color}
\usepackage{transparent}
\usepackage{url}
\usepackage{footmisc}
\usepackage{setspace}
\usepackage[colorlinks=true, citecolor=magenta]{hyperref}
\usepackage{mathtools}
\theoremstyle{plain}

\newtheorem{theorem}{Theorem}[section]

\newtheorem{definition}{Definition}[section]

\newtheorem{problem}{Problem}[section]

% \newtheorem{problem}{Problem}[section]
% expected value
\linespread{0.9}
\begin{document}

\title{GamePlan: Game-Theoretic Multi-Agent Planning with Human Drivers at Intersections, Roundabouts, and Merging}

\author{Rohan Chandra and Dinesh~Manocha% <-this % stops a space
% \thanks{Manuscript received: September 9, 2021; Revised:
% December 2, 2021; Accepted: December 30, 2021.}
% \thanks{This paper was recommended for publication by
% Editor Youngjin Choi upon evaluation of the Associate Editor and Reviewers’ comments.}
\thanks{This work was supported in part by ARO Grants W911NF1910069 and W911NF1910315, Semiconductor Research Corporation (SRC), and Intel. All authors are with the the Department
of Computer Science, University of Maryland, College Park,
MD, 20742, USA. e-mail: rchandr1@umd.edu. Supplementary material provided in the \href{https://arxiv.org/pdf/2109.01896.pdf}{full arXiv Report} (Pg. $9$-$11$).}
% \thanks{Digital Object
% Identifier (DOI): see top of this page.}% <-this % stops a space
% \thanks{J. Doe and J. Doe are with Anonymous University.}% <-this % stops a space
}

\maketitle

\begin{abstract}

% We consider the problem of navigating intersections, roundabouts, and merging scenarios with multiple human drivers and autonomous vehicles (AVs). We formulate the navigation problem using game theory and do not make assumptions with respect to agents' goals, objective cost functions, or state transitions. We present the first hardness result for this problem and show that game-theoretic multi-agent planning with human drivers is NP-complete. We then describe various heuristic ways in which the hardness constraints can be relaxed resulting in various heuristic algorithms that can approximately solve the problem. We show that the theory we propose in this paper can unify many recently proposed algorithms for this problem including those based on deep reinforcement learning, hierarchical reinforcement learning, inverse optimal control, imitation learning, monetary auctions, Stackelberg games etc. 

We present a new method for multi-agent planning involving human drivers and autonomous vehicles (AVs) in unsignaled intersections, roundabouts, and during merging. In multi-agent planning, the main challenge is to predict the actions of other agents, especially human drivers, as their intentions are hidden from other agents. Our algorithm uses game theory to develop a new auction, called \model, that directly determines the optimal action for each agent based on their driving style (which is observable via commonly available sensors). \model~assigns a higher priority to more aggressive or impatient drivers and a lower priority to more conservative or patient drivers; we theoretically prove that such an approach is game-theoretically optimal prevents collisions and deadlocks. We compare our approach with prior state-of-the-art auction techniques including economic auctions, time-based auctions (first-in first-out), and random bidding and show that each of these methods result in collisions among agents when taking into account driver behavior. We compare with methods based on DRL, deep learning, and game theory and present our benefits over these approaches. Finally, we show that our approach can be implemented in the real-world with human drivers.

% Along these lines, we present a new heuristic for solving the navigation problem at intersections, roundabouts, and during merging. In our method, called \model, we formulate the planning problem as an auction in which agents bid their preferences over whether to move or wait based on driver behavior. We use an auction algorithm to compute a turn-based ordering that determines \textit{who} moves \textit{when}, thereby preventing collisions and deadlocks. We prove that such a turn-based ordering is game-theoretically optimal \textit{i.e.,} incentive compatible, welfare maximizing, and can be computed efficiently in polynomial time. We evaluate \textsc{GamePlan} through experiments performed in simulation and demonstrate the use of turn-based orderings using human drivers in real-world scenarios. We show that by using \textsc{GamePlan}, we can reduce collisions and deadlocks in a four-way intersection by up to $30\%$ compared to prior planning methods for uncontrolled and unsignaled traffic scenarios.

% The lack of communication between agents in such scenarios increases the chances of road accidents since the actions of the drivers are guided by human behavior, which is typically unpredictable. Using \textsc{GamePlan}, we propose complete algorithms that compute the optimal tasks for AVs interacting with human drivers without communicating. We validate \textsc{GamePlan} through experiments performed in the real world with human drivers as well as in simulation. We show that by using \textsc{GamePlan}, we can reduce collisions in a four-way intersection by $30\%$.
\end{abstract}

% \begin{IEEEkeywords}
% Multi-robot systems, Multi-agent systems, Scheduling algorithms, Autonomous systems, Intelligent transportation systems, Intelligent vehicles, Traffic control, Vehicular and wireless technologies
% \end{IEEEkeywords}
\section{Introduction}
\label{sec: introduction}
Navigating unsignaled intersections, roundabouts, and merging scenarios is a challenging problem for autonomous vehicles (AVs). $40\%$ of all crashes, $50\%$ of serious collisions, and $20\%$ of fatalities occur at intersections~\cite{grembek2018introducing}. Planning in such scenarios is difficult since agents do not know the objective functions of other agents and therefore may not be able to coordinate their actions. For instance, consider a commonly occurring scenario consisting of two human drivers (green and blue agent) arriving at a four-way, unsignaled intersection at approximately the same time (Figure~\ref{fig: cover}). Both drivers may decide to take the initiative and move first, at the same time, potentially resulting in a collision (Figure~\ref{fig: nogameplan}).

The current methods for navigating unsignaled intersections, roundabouts, or merging scenarios use deep reinforcement learning~\cite{liu2020decision, isele2018navigating, capasso2021end}, game theory~\cite{li2020game, tian2020game}, recurrent neural networks~\cite{roh2020multimodal}, and auctions~\cite{carlino2013auction, sayin2018information, rey2021online}. While these methods successfully perform multi-agent planning, they however do not guarantee collision- and deadlock-free navigation. However, certain auctions that are game-theoretically optimal can navigate unsignaled intersections, roundabouts, or merging scenarios~\cite{carlino2013auction, sayin2018information, rey2021online} that are collision- and deadlock-free. In a traffic auction, each driver (bidder) bids whether to move or wait based on a specific bidding strategy. A centralized auction program collects the bids and allocates each agent a position (goods) that determines when they should move. Such a determination is meant to be acceptable to each agent. Furthermore, the agents make their bid without the knowledge of the objective functions of the other agents. The most common auction observed in most regions is the first-in-first-out (FIFO)~\cite{gt6} in which the bidding strategy is based on the agents' arrival times; but more sophisticated and demonstrably better economic auctions utilizing monetary-based bidding strategies have also been proposed.
\begin{figure}[t]
\centering
   \begin{subfigure}[h]{0.492\columnwidth}
    \includegraphics[width=\textwidth]{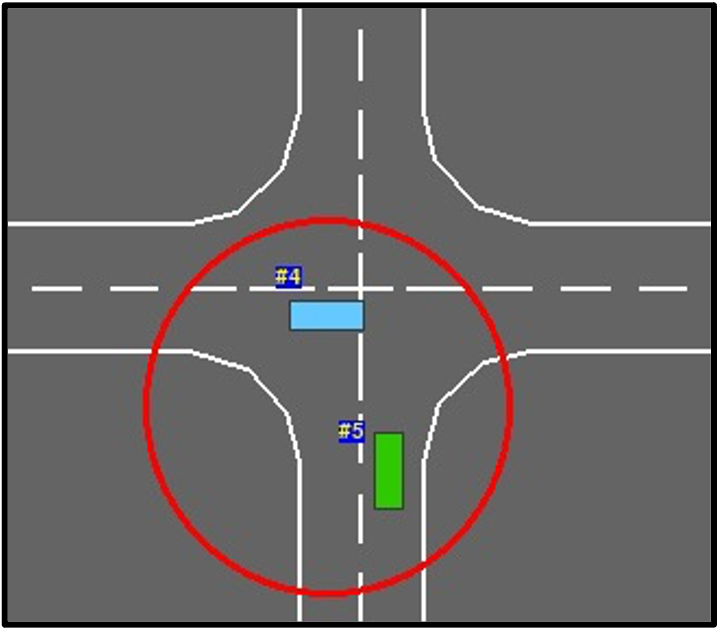}
    \caption{\textbf{Without \textsc{GamePlan}:} Imminent collision occurs when agents enter an unsignaled intersection.}
    \label{fig: nogameplan}
  \end{subfigure}
 \begin{subfigure}[h]{0.492\columnwidth}
    \includegraphics[width=\textwidth]{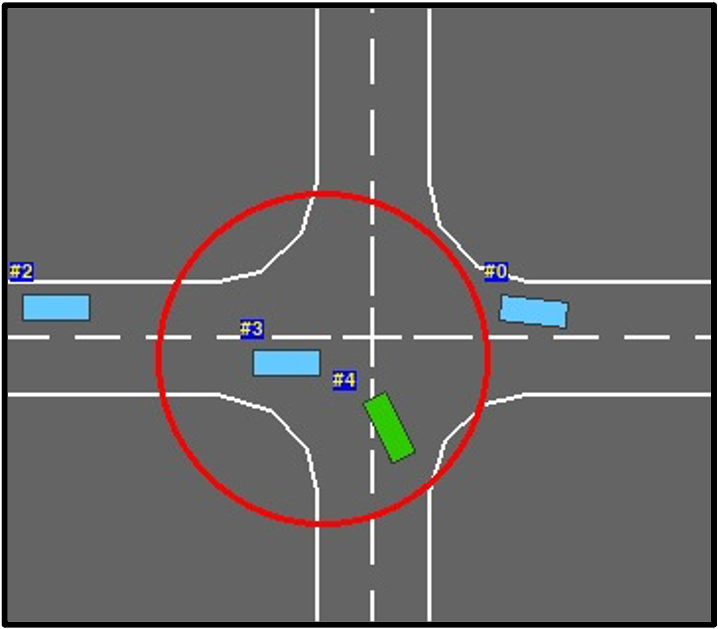}
    \caption{\textbf{With \model:} the green agent allows the more aggressive blue agent to pass first.}
    \label{fig: gameplan}
  \end{subfigure}
 
  %
%     \begin{subfigure}[h]{0.32\columnwidth}
%     \includegraphics[width=\textwidth]{img/sim_results/non-deadlock.png}
%     \caption{\model.}
%     \label{fig: cross_no_coll_deadlock}
%   \end{subfigure}
%
  %
%   }
\caption{\textbf{\model:} We present the first approach of using driver behavior to compute a collision- and deadlock-free turn-based ordering for non-communicating agents at unsignaled intersections, roundabouts, and during merging. Figure~\ref{fig: nogameplan} shows that two impatient agents arriving at the intersection at the same time are more likely to collide due to their implicit nature of moving first. With \model, we compute a turn-based ordering in which more aggressive agents are allowed to pass first. Thus, in Figure~\ref{fig: gameplan}, the more aggressive of the two agents (blue agent) crosses the intersection followed by the green agent, thereby preventing a collision.}
  \label{fig: cover}
  \vspace{-5pt}
\end{figure}

Game-theoretically optimal auctions are hard to design. Without these optimality guarantees, agents may be incentivized to change their bids to increase their utility; but changing bids mid-navigation results in collisions and deadlocks. The state-of-the-art optimal auctions currently available are economic auctions~\cite{carlino2013auction, sayin2018information, rey2021online} wherein agents are assigned wallets with monetary budgets. The issue with these auctions is that they are biased towards wealthier agents~\cite{gt6} and disregard social preferences of human drivers. For instance, consider impatient agents wanting to move first that have smaller budgets. Such agents despite being outbid by more conservative agents with higher budgets may be incentivized to falsify their bids--unbeknownst to other agents--in order to move first. This may result in a collision with an agent that was originally chosen to move before the aggressive agent. Therefore, new auction designs for navigation must be developed in which the bidding, allocation, and payment rules would be based, not only on the monetary budget of an agent but on other aspects as well. In this work, we design a new robotics-based auction framework for navigating unsignaled intersections, roundabouts, and merging scenarios with multiple human drivers based on their driving behavior.

\noindent\textbf{Main contributions: }We propose the following contributions. 
\begin{enumerate}
    \item We propose \model, a robotics-based game-theoretically optimal auction, in which vehicle trajectories and velocities are used to directly estimate a driver's aggressiveness or impatience. \model~uses driver behavior to define the bidding strategy. The output of the auction is an optimal turn-based ordering in which more aggressive or impatient drivers move first.
    \item We show that \model~outperforms other multi-agent planning methods with significantly reduced collisions and deadlocks in unsignaled intersections, roundabouts, and merging scenarios. 
    % Furthermore, we successfully demonstrate our algorithm in the real world with human drivers. 
\end{enumerate}

\noindent Compared to methods based on deep reinforcement learning, game theory, and recurrent neural networks, \model~achieves on average, at least $10-20\%$ decrease in the rate of collisions and deadlocks. We also compare \model~with economic auctions, random bidding strategies (for \textit{e.g.} using a driver's height or weight as the bidding strategy), and the FIFO principle, and outperforms the current state-of-the-art economic auctions by approximately $3.5\%$. Finally, we demonstrate \model~works successfully in two real-world merging scenarios involving real human drivers.

\section{Prior Work}
\label{sec: related_work}
In Table~\ref{tab: bidding_rules}, we compare our approach with the current state-of-the-art in navigating unsignaled intersections, roundabouts, and merging scenarios on the basis of optimality guarantees, multi-agent versus single-agent planning (MAP), action space (AS), incentive compatibility (IC), and real-world applicability. 

\subsubsection{Deep reinforcement learning (DRL)}

DRL-based methods~\cite{capasso2021end, isele2018navigating, Kai2020AMR, liu2020decision, bgap} learn a navigation policy using the notion of expected reward received by an agent from taking a particular action in a particular state. This policy is learned from trajectories obtained via traffic simulators using Q-learning~\cite{dql} and is very hard as well as expensive to train. In practice, DRL-based planning methods often do not generalize well to different environments and it is hard to provide any guarantees. Furthermore, these methods discussed so far are intended for single-agent navigation. However, Capasso et al.~\cite{capasso2021end} use additional signals such as traffic signs (stop, yield, none) to regulate the movement and actions of multiple agents. In terms of real world applications, Kai et al.~\cite{Kai2020AMR} learn a unified policy for multiple tasks and also demonstrate their approach on a real robot.

\subsubsection{Game theory}

Game-theoretic approaches~\cite{li2020game,tian2020game}, by their very nature, perform multi-agent planning. However, these methods restrict the actions and objectives of active agents. For example, Li et al.~\cite{li2020game} formulate traffic intersection planning as a stackelberg leader-follower game in which one agent is assumed to act first (leader) and the other agent (follower) will react accordingly. The leader is decided by a first-in first-out (FIFO) principle. Tian et al.~\cite{tian2020game} use a recursive $k-$level game-theoretic approach in which complex strategies for agents at each level are derived from previous levels. However, all agents except the ego-agent at the first level are assumed to be static.

\begin{table}%[tbhp]
\centering
\caption{\textbf{Summary of prior work:} We list methods for navigating unsignaled intersections, roundabouts, and merging based on multi-agent planning (MAP), action space (AS), and incentive compatibility (IC). \cmark$^{*}$ corresponding to a method indicates that optimality does not hold for human drivers with varying social preferences.}
\resizebox{\columnwidth}{!}{
\begin{tabular}{lrccccc}
\toprule
Approach & Methods & Optimality & MAP & AS & IC & Real world\\
\midrule
\multirow{4}{*}{\small DRL}& Capasso et al.~\cite{capasso2021end} &\cellcolor{red!25} \xmark&\cellcolor{green!25} \cmark &C & -& \cellcolor{red!25} \xmark\\
& Isele et al.~\cite{isele2018navigating} &\cellcolor{red!25} \xmark&\cellcolor{red!25} \xmark&D & -& \cellcolor{red!25} \xmark\\
& Kai et al.~\cite{Kai2020AMR} &\cellcolor{red!25} \xmark&\cellcolor{red!25} \xmark& D & -& \cellcolor{green!25} \cmark\\
& Liu et al.~\cite{liu2020decision} &\cellcolor{red!25} \xmark&\cellcolor{red!25} \xmark&D & -& \cellcolor{red!25} \xmark\\
\cmidrule{2-7}

\multirow{2}{*}{\small Game Theory}& Li et al.~\cite{li2020game} &\cellcolor{red!25} \xmark&\cellcolor{green!25} \cmark&C & -& \cellcolor{red!25} \xmark\\
& Tian et al.~\cite{tian2020game} &\cellcolor{red!25} \xmark&\cellcolor{green!25} \cmark&C & -& \cellcolor{red!25} \xmark\\
\cmidrule{2-7}

{RNN} & Roh et al.~\cite{roh2020multimodal} &\cellcolor{red!25} \xmark&\cellcolor{green!25} \cmark&C & -& \cellcolor{red!25} \xmark\\
\cmidrule{2-7}

% \multirow{2}{*}{\small FIFO}& FIFO (basic) &\cellcolor{red!25} \xmark&\cellcolor{green!25} \cmark&D & -& \cellcolor{red!25} \xmark\\
% & Buckman et al.~\cite{gt6} &\cellcolor{red!25} \xmark&\cellcolor{green!25} \cmark&D & -& \cellcolor{red!25} \xmark\\
% \cmidrule{2-8}

\multirow{8}{*}{\small Auctions}& FIFO &\cellcolor{red!25} \xmark&\cellcolor{green!25} \cmark&D & -& \cellcolor{red!25} \xmark\\
& Buckman et al.~\cite{gt6} &\cellcolor{red!25} \xmark&\cellcolor{green!25} \cmark&D & -& \cellcolor{red!25} \xmark\\
& Vasirani and Ossowski~\cite{vasirani2012market}&\cellcolor{red!25} \xmark&\cellcolor{green!25} \cmark&D & \cellcolor{red!25} \xmark& \cellcolor{red!25} \xmark\\
& Censi et al.~\cite{censi2019today}&\cellcolor{red!25} \xmark&\cellcolor{green!25} \cmark&D & \cellcolor{red!25} \xmark& \cellcolor{red!25} \xmark\\
& Lin and Jabari~\cite{lin2021pay}&\cellcolor{red!25} \xmark&\cellcolor{green!25} \cmark&D & \cellcolor{red!25} \xmark& \cellcolor{red!25} \xmark\\
& Carlino et al.~\cite{carlino2013auction} &\cellcolor{green!25} \cmark$^{*}$&\cellcolor{green!25} \cmark&D & \cellcolor{green!25} \cmark& \cellcolor{red!25} \xmark\\
& Rey et al.~\cite{rey2021online}&\cellcolor{green!25} \cmark$^{*}$&\cellcolor{green!25} \cmark&D & \cellcolor{green!25} \cmark& \cellcolor{red!25} \xmark\\
& Sayin et al.~\cite{sayin2018information} &\cellcolor{green!25} \cmark$^{*}$&\cellcolor{green!25} \cmark&D & \cellcolor{green!25} \cmark& \cellcolor{red!25} \xmark\\
\cmidrule{2-7}
& \textbf{This work} &\cellcolor{green!25} \cmark&\cellcolor{green!25} \cmark&D & \cellcolor{green!25} \cmark & \cellcolor{green!25} \cmark\\

\bottomrule
\end{tabular}
}
\label{tab: bidding_rules}
\vspace{-10pt}
% \addtabletext{nomenclature for the TSs refers to the numbered species in the table.}
\end{table}

\subsubsection{Recurrent neural networks (RNNs)}

Deep learning-based methods~\cite{roh2020multimodal} train a recurrent neural network for trajectory prediction and are also susceptible to complex environments or different behavior of drivers. In contrast to all of the above, \model~do not require an objective function; instead, successful application of this approach requires minimizing a loss function that depends on the distribution of the data.

\subsubsection{Auctions}

A very basic principle to regulate traffic at intersections is the FIFO principle which basically states that agents move in the order in which they arrive at the intersection. The main concern with such an approach is that aggressive or impatient drivers are incentivised to break form and move out of turn. Additionally, in situations when drivers arrive at an intersection at the same time, oftentimes deadlocks occur as a result of ensuing confusion among the drivers. Therefore, FIFO is not guaranteed to be optimal. Buckman et al.~\cite{gt6} integrate a driver behavior model~\cite{schwarting2019social} within the basic FIFO framework to incorporate human social preference to address some of the above limitations of FIFO. But the model does not estimate the social preferences in real time; instead, it chooses a fixed preference parameter for each agent.

Vasirani and Ossowski~\cite{vasirani2012market} proposed a combinatorial auction for assigning turns at intersections. Censi et al.~\cite{censi2019today} introduced a karma-based auction and Lin and Jabari~\cite{lin2021pay} proposed a mechanism for pricing intersection priority based on transferable utility games. However, these auctions are not incentive-compatible. Incentive-compatible auctions such as Sayin et al.~\cite{sayin2018information} propose a mechanism in which agents are assigned turns based on their distance from the intersection and the number of passengers in the vehicle. Carlino et al.~\cite{carlino2013auction} and Rey et al.~\cite{rey2021online} propose a similar mechanism but use a monetary-based bidding strategy. These methods are, however, biased towards wealthier agents disregarding human preference, limiting their use in the real-world applications. 
% There are also auction formats that are not incentive compatible. 

\section{Background and Problem Formulation}
In this section, we begin by defining several key terms and stating the problem statement and assumptions made in our approach~(Section~\ref{subsec: problem_formulation}). We briefly summarize the CMetric algorithm~\cite{cmetric} for behavior modeling and prediction that we use in \model~in Section~\ref{subsec: cmetric_background}. Finally, we provide an overview on sponsored search auctions~\cite{roughgarden2016twenty} in Section~\ref{subsec: SSA_background} which forms the basis of the auction used in \model.

\subsection{Problem Formulation}
\label{subsec: problem_formulation}

We frame navigating unsignaled intersections, roundabouts, and merging scenarios as a multi-agent game-theoretic non-sequential decision-making problem. We consider three unsignaled and uncontrolled traffic scenarios or environments -- intersections, merging, and roundabouts with multiple non-communicating human drivers and AVs. In each of the three scenarios, we refer to a traffic-agent as ``active'' if they participate in the decision-making process for navigating the scenario. For example, the lead traffic-agents in each service lane of a four-way intersection are active vehicles, while agents waiting in line behind their respective lead vehicle are non-active agents. We denote the active agents as a set $\mathcal{A} = \{a_1, a_2, \ldots a_n \}$ where each $a_i$ participates in the \model~auction. The action space for each active agent is a discrete set consisting of finite acceleration values. The state space consists of the position $x_t \in \mathcal{X}$ and velocity $v_t$ of each agent measured in a global coordinate frame (\textit{e.g.} GPS coordinates or $3$D points obtained using GPS and lidars).

\noindent\textbf{Problem Statement: } To prevent collisions and deadlocks, it is necessary to determine the order in which agents take turns to navigate the traffic scenario~\cite{carlino2013auction, dresner2008multiagent, lin2021pay, sayin2018information, schepperle2007agent, vasirani2012market}. Thus, the goal of this paper is to compute an optimal turn-based ordering which is defined as follows,

\begin{definition}
\textbf{Turn-based Orderings ($\bm{\sigma}$): } A turn-based ordering is a permutation $\bm{\sigma}: \mathcal{A} \rightarrow [1,n]$ over the set $\mathcal{A}$. For any $i,j \in [1,n]$, $\bm{\sigma}(a_i) = j$, equivalently $\bm{\sigma}_i = j$, indicates that $a_i$ with driving behavior $\bm{\zeta}_i$ will move on the $j^\textrm{th}$ turn.
\label{def: turn_based_ordering}
\end{definition}

\noindent There may be multiple turn-based orderings that may apply in a given scenario. However, many of them could be sub-optimal and may result in collisions and deadlocks. An optimal turn-based ordering prevents collisions and resolves deadlock conflicts.

\begin{definition}
\textbf{Optimal Turn-based Ordering ($\bm{\sigma}_{\textsc{opt}}$): } A turn-based ordering, $\bm{\sigma}$, over a given set $\mathcal{A}$ is game-theoretically optimal if it is incentive compatible, welfare maximizing, and can be computed in polynomial time\footnote{These terms are defined in Section~\ref{subsec: optimality}}.
\label{def: optimal_turn_based_ordering}
\end{definition}

\noindent We are now ready to formally state our goal in this paper.

\begin{problem}
In a traffic scenario (which could be an intersection, roundabout, or merging) with $n$ active road-agents, given the behavior profile $\bm{\zeta}_i$ of each agent, the goal is to compute the optimal turn-based ordering $\bm{\sigma}_\textsc{opt}  = \bm{\sigma}_1\bm{\sigma}_2\ldots\bm{\sigma}_n$.
\label{problem: prob}
\end{problem}

\noindent\textbf{Assumptions: }We assume agents (AVs and human drivers) are rational and strive to maximize their own utility which can be independent of other agents. Further, agents are \textit{non-ideal} (in contrast to ideal agents in~\cite{cleac2020lucidgames, cleac2019algames, wang2020game}) in that they do not have access to the actions, objectives, or utility functions of other agents. We make no assumptions on the motion or dynamics of traffic-agents. The state-space is fully observable by all agents (active and non-active). We assume the availability of a behavior modeling and prediction algorithm such as CMetric~\cite{cmetric} that models driving behavior as being either aggressive or conservative. This information is provided to an auction program. We assume that agents navigate through the environment one at a time and do not use traffic signals, employ right-of-way rules, or communicate with other agents.

\subsection{Modeling human driver behavior}
\label{subsec: cmetric_background}

Each active agent is characterized by a behavior profile, $\bm{\zeta}_i$, which can be obtained via recent behavior modeling algorithms such as the SVO~\cite{schwarting2019social} and CMetric~\cite{cmetric}. In this work, we use CMetric to quantify the aggressiveness or conservativeness of a traffic-agent. This model provides an objective measure of aggressiveness based on driving maneuvers such as overspeeding, overtaking and so on. We briefly summarize the CMetric algorithm. To determine if an agent is aggressive or conservative, the algorithm begins by reading the trajectories of the ego-agent and surrounding vehicles via cameras or lidars during any time-period $\Delta t$. The trajectory of an agent is represented by 

\[\Xi_{\Delta t} = \{x_t \ | \ t = t_0, t_1, \ldots, t_0 + \Delta t \}.\]

\noindent In CMetric, these trajectories are represented via weighted undirected graphs $\mathcal{G} = (\mathcal{V}, \mathcal{E})$ in which the vertices denote the positions for each agent and the edges correspond to the distances between agents. The algorithm proceeds by using these graphs to model the likelihood and intensity of driving behavior indicators like overspeeding, overtaking, sudden ``zig-zagging'' and lane-changes via centrality functions~\cite{rodrigues2019network} represented by $\Phi: \mathcal{G}\rightarrow \mathbb{R}$. These behavior indicators determine whether an agent is aggressive or conservative. The behavior profile for $a_i$ is denoted by $\bm{\zeta}_i$ and is computed as,

\begin{equation}
    \bm{\zeta}_i(\Xi_{\Delta t}) = \Phi^i(\mathcal{G})[t],
    \label{eq: cmetric}
\end{equation}

\noindent where $\mathcal{G}(\mathcal{V}, \mathcal{E})$ is constructed using $\Xi_{\Delta t}$. The original definition of centrality, however, does not take into account temporal memory since the the centrality value changes with time. In order to model driver behavior during a time period $\Delta T$, we must keep track of all the neighbors the vehicle has interacted with during that period. $\Phi(\mathcal{G})[t]$ is defined as follows,

\begin{definition}
\textbf{Temporal memory for} $\bm{\Phi(\mathcal{G})}$: In a connected traffic-graph $\mathcal{G}$ at time $t$ with associated adjacency matrix $A_t$, let $\mathcal{N}_i(t) = \{ v_j \in \mathcal{V}(t), \ A_t(i,j) \neq 0, \nu_j \leq \nu_i\}$ denote the set of vehicles in the neighborhood of the $i^\textrm{th}$ vehicle with radius $\mu$, then the discrete degree centrality function of the $i^\textrm{th}$ vehicle at time $t$ is defined as,
\begin{equation}
    % \zeta_d(u) = \vts{ \{ A(u,v)\in A_{u,:} \land  \nu_u \geq \nu_v \} },
    \begin{aligned}
    \Phi^i[t] = \Phi^i \left( \left\{ v_j \in \mathcal{N}_i(t) \right\} \right) + 
    \Phi^i[t-1] &\\
    \textrm{such that} \ (v_i,v_j) \not\in \mathcal{E(\tau)}, \tau = 0, \ldots, t-1&
    \end{aligned}
    % \zeta_d(u) = 
    \label{eq: degree}
\end{equation}
where $\lvert{\cdot}\rvert$ denotes the cardinality of a set and $\nu_i, \nu_j$ denote the velocities of the $i^\textrm{th}$ and $j^\textrm{th}$ vehicles, respectively.
\end{definition}

\noindent In this work, we use CMetric to compute the behavior profiles of traffic-agents, represented by a $n-$dimensional vector $\bm{\zeta},$ that is provided to the auction program.

\subsection{Sponsored search auctions (SSAs)}
\label{subsec: SSA_background}
Sponsored search auctions (SSAs) are a game-theoretic mechanism that are used extensively in internet search engines for the purpose of internet advertising~\cite{roughgarden2016twenty}. In an SSA, there are $K$ items to be allocated among $n$ agents. Each agent $a_i$ has a private valuation $\bm{v}_i$ and submits a bid $\bm{b}_i$ to receive at most one item of value $\bm{\alpha}_i$. A strategy is defined as an $n$ dimensional vector, $\bm{b} = (\bm{b}_i \cup \bm{b}_{-i})$, representing the bids made by every agent. $\bm{b}_{-i})$ denotes the bids made by all agents except $a_i$. Furthermore, let $\bm{b}_1 > \bm{b}_2 > \ldots > \bm{b}_K$ and $\bm{\alpha}_1 > \bm{\alpha}_2 > \ldots > \bm{\alpha}_K$. The allocation rule is that the agent with the $i^\textrm{th}$ highest bid is allocated the $i^\textrm{th}$ most valuable item, $\bm{\alpha}_i$. The utility $\bm{u}_i$~\cite{roughgarden2016twenty} incurred by $a_i$ is given as follows,

\begin{equation}
 \bm{u}_i (\bm{b}_i) =  \bm{v}_i \bm{\alpha}_i - \sum_{j=i}^k \bm{b}_{j+1} \left( \bm{\alpha}_j - \bm{\alpha}_{j+1} \right).
 \label{eq: utility_template}
\end{equation}

\noindent In the equation above, the quantity on the left represents the total utility for $a_i$ which is equal to value of the allocated goods $\bm{\alpha}_i$ minus a payment term. The first term on the right is the value of the item obtained by $a_i$. The second term on the right is the payment made by $a_i$ as a function of bids $\bm{b}_{j > i}$ and their allocated item values $\bm{\alpha}_j$. We refer the reader to Chapter $3$ in~\cite{roughgarden2016twenty} for a derivation and detailed analysis of Equation~\ref{eq: utility_template}.

In our approach, we re-cast Equation~\ref{eq: utility_template} through the lens of a human driver. More specifically, the term $\bm{v}_i \bm{\alpha}_i$ denotes the time reward gained by driver $a_i$ by moving on her turn. The payment term represents a notion of risk~\cite{wang2020game} associated with moving on that turn. It follows that an allocation of a conservative agent to a later turn (smaller $\bm{\alpha}$) also presents the lowest risk and vice-versa.

\section{Our Algorithm: \model}
\label{sec: gameplan}

% \subsection{Problem Specification and Desiderata}
% \label{subsec: probem_specifications}

% In designing an approach to multi-agent task planning, we aim to satisfy the following properties-

% \begin{itemize}
%     \item Decentralized Planning
%     \item Generalization
%     \item Polynomial Runtime
%     \item Sensor Realism
%     \item Protocol Standardization
%     \item Incentive Compatibility
%     \item Welfare Maximizing
%     \item Real-world Application
% \end{itemize}

% \subsection{Modeling Multi-Agent Planning as an Allocation Problem}

% Recall that each agent has two global actions, \texttt{MOVE} and \texttt{HOLD}, and that each agent moves one at a time. Therefore, to ensure collision- and deadlock-free navigation, it is necessary to determine the ordering $\bm{\sigma}  = \sigma_1\sigma_2\ldots\sigma_n$ in which agents take turns to navigate the traffic scenario.

Choosing an optimal ordering, in which agents navigate unsignaled and uncontrolled traffic scenarios, can be cast as an allocation problem where the goal is to allocate each agent, $a_i$, a position in the optimal turn-based ordering ($\bm{\sigma}_i$). Deciding such an allocation depends heavily on the incentives of the agents which, in the case of non-ideal agents, is a hard problem. Prior planning methods model non-ideal agents by estimating the objective functions of the agents from noisy data using statistical methods~\cite{gt1, schwarting2018planning, tian2020game} or by assuming a fixed behavior for surrounding agents (static or constant velocity)~\cite{tian2020game, li2020game}. These methods are not guaranteed to be optimal and result in collisions and deadlocks in unsignaled traffic scenarios, as shown in Table~\ref{tab: accuracy}.

Auction-based methods, on the other hand, model non-ideal agents in unsignaled traffic scenarios effectively albeit using a monetary-based bidding strategy that is not realizable in real-world scenarios~\cite{carlino2013auction,sayin2018information, lin2021pay, vasirani2012market}. Our formulation, \model, differs in this regard wherein we use a novel online driving behavior-based bidding strategy using the CMetric model~\cite{cmetric}. In the rest of this section, we present the main algorithm followed by an analysis of its optimality.

\subsection{Algorithm}
\label{subsec: using_auctions_for_multi}

Our goal is to solve Problem~\ref{problem: prob} and compute the optimal turn-based ordering $\bm{\sigma}_{\textsc{opt}} = \bm{\sigma}_1 \bm{\sigma}_2 \ldots \bm{\sigma}_n$, which shall determine the order in which agents will navigate unsignaled intersections, roundabouts, or merging. Our \model~algorithm proceeds in two stages: the behavior modeling phase and the planning phase.

During the behavior modeling phase, we use CMetric to compute the behavior profiles $\bm{\zeta}_i$ for every agent (active or non-active) using Equation~\ref{eq: cmetric} during an observation period of $5$ seconds. However, alternative behavior models such as SVO~\cite{schwarting2019social} may be used. This is followed by the planning phase which runs a sponsored search auction (SSA) scheme. In the auction scheme, each active agent has a private valuation $\bm{v}_i$. Each agent $a_i$ submits a bid $\bm{b}_i \in \mathbb{R}^{\geq 0}$ and obtains a time reward of $\frac{1}{t_i}$ for completing the navigation task in $t_i$ seconds, measured from the time the first agent begins to move. Note that moving earlier corresponds to a higher time reward. 

% We show in Section~\ref{subsec: optimality} that the optimality is achieved by setting $\bm{b}_i = \bm{\zeta}_i$. 
To summarize the algorithm, the agent with the highest bid \textit{i.e.} most aggressive behavior is allocated the highest priority and is allowed to navigate the scenario first, followed by the second-most aggressive, and so on. Therefore, 

\begin{equation}
    (\bm{\sigma}_\textsc{opt})_i = j^{\star},
    \label{eq: algorithm}
\end{equation} 

\noindent where $j^\star$ is the index of $\bm{\zeta}_{j^\star}$ in the sequence $\bm{\zeta}_1 > \bm{\zeta}_2 > \ldots > \bm{\zeta}_{j^\star} > \ldots > \bm{\zeta}_K$.

% \noindent \textbf{Algorithm:}
% \begin{enumerate}[noitemsep]
%     \item[] \textbf{Algorithm $1$:}
%     \item Given set $\mathcal{A}$ of $n$ active agents, $K$ turns, and the behavior profile of each agent $\bm{\zeta}_i$. Initialize $k \gets 0$.  \label{algo: Algorithm}
%     \item Sort $\mathcal{A}$ in decreasing order of $\bm{\zeta}$.
%     \item Do the following for $K$ steps:
%     \begin{enumerate}
%         \item Increment $k$ by $1$.
%         % \item Determine the next agent to move via $i = \argmax_j \{\bm{\zeta}_j \ | \ a_j \in \mathcal{A}\}$.
%         \item Let $a_i$ be the the first element in $\mathcal{A}$.
%         \item Set $\sigma_i = k$.
%         \item $\mathcal{A} \gets \mathcal{A} \setminus \{a_i\}$.
%     \end{enumerate}
%     \item $\sigma_{\textsc{opt}} = \sigma_1\sigma_2 \ldots \sigma_K$.
% \end{enumerate}

\subsection{Game-theoretic optimality and efficiency analysis}
\label{subsec: optimality}

% In game theory, a strategy is defined as an $n$ dimensional vector, $b = (\bm{b}_i \cup \bm{b}_{-i})$, representing the bids made by every agent.

In this section, we show that our approach is incentive compatible, welfare maximizing, and can be computed in polynomial time.

% Finally, SSAs have polynomial time complexity since sorting is polynomial time~\cite{clrs}.
\subsubsection{Incentive compatibility}

The goal of any optimal auction should be such that that no agent is incentivised to ``cheat'' or, more simply, when the dominant strategy for each agent is to bid their true valuation $\bm{v}_i$. We define a dominant strategy as,

\begin{definition}
\textbf{Dominant Strategy:} Bidding $\bm{b}_i$ is a dominant strategy for $a_i$ if $\bm{u}_i(\bm{b}_i, \bm{b}_{-i}) > \bm{u}_i(\bm{\bar b}_i, \bm{b}_{-i})$ for all $ \bm{\bar b}_i \neq \bm{b}_i$.
\end{definition}

\noindent Ensuring fair allocations is crucial for auctions applied to traffic scenarios since unfair allocations could result in collisions and deadlocks. Incentivising traffic-agents to bid their true value as a dominant strategy is known as incentive compatibility~\cite{sayin2018information, rey2021online, roughgarden2016twenty, carlino2013auction} which is defined as follows,

\begin{definition}
\textbf{Incentive Compatibility:} An auction is said to be incentive compatible if for each agent, bidding $\bm{b}_i = \bm{v}_i$ is a dominant strategy.
\label{def: incentive_compatibility}
\end{definition}
% We want to show that $\sigma_{\textsc{opt}}$ is incentive compatible, maximizes welfare, and can be computed in polynomial time. Incentive compatibility ensures that the best action

\noindent In our formulation, we set the true valuation ($\bm{v}_i$) for a traffic-agent to be equal to its behavior profile $\bm{\zeta}_i$. We justify this decision in Section~\ref{subsec: alternate_strat}. Hence,

\[\bm{v}_i = \bm{\zeta}_i.\]

\noindent And so to show that our auction is incentive-compatible, we show the following,
% We verify these properties through the following analysis.

\begin{theorem}
For each active agent $a_i \in \mathcal{A}$ at a traffic intersection, roundabout, or during merging, bidding $\bm{b}_i = \bm{\zeta}_i$ is the dominant strategy. 
\label{thm: incentive_compatibility}
\end{theorem}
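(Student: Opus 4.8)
The plan is to treat the auction in Equation~\ref{eq: utility_template} as a position (sponsored-search) auction with $K=n$ turns and to establish truthfulness by the classical externality/welfare-difference argument (cf.\ Chapter~3 of~\cite{roughgarden2016twenty}). First I would fix an agent $a_i$ and freeze the reports $\bm{b}_{-i}$ of the others, noting that $a_i$'s own report $\bm{b}_i$ affects $\bm{u}_i$ only through the turn $s=s(\bm{b}_i)$ it is allocated: the allocation rule ranks all reports in decreasing order and gives the $j$-th turn (item $\bm{\alpha}_j$) to the $j$-th highest report, so $\bm{b}_i$ merely selects a slot. Expanding the payment term $\sum_{j\ge s}\bm{b}_{j+1}(\bm{\alpha}_j-\bm{\alpha}_{j+1})$ of Equation~\ref{eq: utility_template} and telescoping, I would rewrite the utility as
\[
\bm{u}_i(\bm{b}_i,\bm{b}_{-i})=\Big[\bm{v}_i\,\bm{\alpha}_{s(\bm{b}_i)}+\sum_{j\neq i}\bm{b}_j\,\bm{\alpha}_{\pi(j)}\Big]-C,
\]
where $\pi$ is the turn assignment the reports induce on the other agents, and $C$ is the total time reward those $n-1$ agents would accrue in the auction run \emph{without} $a_i$; the point is that $C$ does not depend on $\bm{b}_i$.

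The key step is then a rearrangement-inequality argument applied to the bracketed quantity. Substituting the truthful report $\bm{b}_i=\bm{v}_i=\bm{\zeta}_i$, the induced ranking matches the weight vector $(\bm{v}_i,\bm{b}_{-i})$ assortatively against the strictly decreasing item values $\bm{\alpha}_1>\dots>\bm{\alpha}_n>0$, which by the rearrangement inequality maximizes $\bm{v}_i\bm{\alpha}_{s}+\sum_{j\neq i}\bm{b}_j\bm{\alpha}_{\pi(j)}$ over all turn assignments. Any deviation $\bm{\bar b}_i\neq\bm{\zeta}_i$ sorts a \emph{different} vector assortatively and hence cannot increase the bracket, so $\bm{u}_i(\bm{\zeta}_i,\bm{b}_{-i})\ge\bm{u}_i(\bm{\bar b}_i,\bm{b}_{-i})$, which is exactly the dominant-strategy condition of Definition~\ref{def: incentive_compatibility} once we recall $\bm{v}_i$ was set equal to $\bm{\zeta}_i$. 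Since the behavior profiles are assumed distinct, the inequality is strict whenever the deviation actually changes $a_i$'s allocated turn (deviations too small to alter the ranking leave $\bm{u}_i$ unchanged, so the statement is read with the standard dominant-strategy convention). An equivalent, shorter route is to verify the hypotheses of Myerson's lemma: the ``sort-by-report'' allocation is monotone in $\bm{b}_i$ and the payment in Equation~\ref{eq: utility_template} is precisely its critical-value payment, which forces truthfulness.

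The step I expect to be the main obstacle is the bookkeeping behind the telescoping identity: when $a_i$ moves from its truthful turn to a different turn $s$, every agent strictly between the two positions shifts by one, and the payment sum in Equation~\ref{eq: utility_template} must be re-derived for that new configuration before it can be collapsed into the single $\bm{b}_i$-independent constant $C$. Checking that this collapse is valid for every slot $s$, including the extreme turns $s=1$ and $s=n$ (and handling tie-breaking when two reports coincide), is where the argument needs care. The alternative ``direct'' proof --- split into the cases where the deviation moves $a_i$ earlier or later and bound $\bm{u}_i$ turn by turn --- hits exactly the same re-indexing issue, so I would carry out the welfare-difference formulation rather than the case analysis.
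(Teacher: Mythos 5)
Your proof is correct, but it takes a genuinely different route from the paper. The paper argues directly from Equation~\ref{eq: utility_template} by a two-case analysis: it considers the $k^\textrm{th}$ bidder over-bidding so as to move up exactly one slot (to position $k-1$) and under-bidding so as to drop exactly one slot (to position $k+1$), and in each case computes the change in utility explicitly, showing it is negative because the extra payment term $\bm{b}\left(\frac{1}{t_{k-1}}-\frac{1}{t_k}\right)$ outweighs the gain in time reward (and symmetrically for under-bidding). You instead run the classical externality/welfare-difference argument for position auctions with VCG-style payments: telescope the payment into ``welfare of the others without $a_i$'' minus ``welfare of the others with $a_i$ present,'' so that $\bm{u}_i$ equals the total surrogate welfare $\bm{v}_i\bm{\alpha}_{s}+\sum_{j\neq i}\bm{b}_j\bm{\alpha}_{\pi(j)}$ up to a constant independent of $\bm{b}_i$, and then invoke the rearrangement inequality to show the truthful (assortative) slot maximizes that quantity. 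Your version buys two things the paper's does not: it handles deviations to an \emph{arbitrary} slot in one stroke, whereas the paper only ever analyzes a shift by a single position; and it sidesteps the delicate re-indexing of the payment sum after a deviation, which is exactly where the paper's algebra is shakiest (its Equation~\ref{eq: increase_in_utility} reads $\left(\bm{\zeta}_k-\bm{b}_k\right)\left(\frac{1}{t_{k-1}}-\frac{1}{t_k}\right)$, which vanishes under the truthful identification $\bm{b}_k=\bm{\zeta}_k$; the displaced agent's bid $\bm{b}_{k-1}$ is what should appear there). The paper's approach is more elementary and makes the ``risk penalty versus time reward'' trade-off visible to the reader, which matches the driving interpretation; yours is the more complete and standard mechanism-design proof. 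Your closing caveat about tie-breaking is also handled in the paper only by assumption ($\bm{\zeta}_i\neq\bm{\zeta}_j$, with random tie-breaking otherwise), so nothing is lost there.
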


% \noindent We defer the proof to the supplementary material.
\begin{proof}
Recall that the $k^\textrm{th}$ highest bidder ($k^\textrm{th}$ most aggressive agent) receives a time reward $\bm{\alpha}_k = \frac{1}{t_k}$. Then according to Equation~\ref{eq: utility_template}, the overall utility achieved by the $k^\textrm{th}$ most aggressive traffic-agent is,  

\begin{equation*}
  \bm{u}_k (\bm{b}_k) =  \bm{\zeta}_k \left( \frac{1}{t_k} \right) - \sum_{j=k}^K \bm{b}_{j+1} \left(\frac{1}{t_j} - \frac{1}{t_{j+1}} \right).
\end{equation*}

\noindent We sort the $K$ highest bids received in the following order: $\bm{b}_1 > \bm{b}_2 > \ldots > \bm{b}_K$. In order to show that $\bm{b}_k = \bm{\zeta}_k$ is the dominant strategy, it is sufficient to show that over-bidding ($\bm{\bar b}_k > \bm{\zeta}_k$) and under-bidding ($\bm{\bar b}_k < \bm{\zeta}_k$) both result in a lower utility than $\bm{u}_k$. We proceed by analyzing both cases.\\

\noindent \textbf{Case 1: Over-bidding ($\bm{\bar b}_k = \bm{b}_{k-1} > \bm{b}_k$):} In this case, the new utility for $a_k$ is $\bm{\bar u}_k (\bm{\bar b}_k)$ which is equal to,

\begin{equation}
% \begin{split}
    % &\bm{\bar u}_k (\bm{\bar b}_k) = \\
    %  &\bm{b}_{k} \left( \frac{1}{t_{k-1}} \right) - \bm{b}_k \left( \frac{1}{t_{k-1}} - \frac{1}{t_{k}} \right ) - \sum_{j=k}^K \bm{b}_{j+1} \left(\frac{1}{t_j} - \frac{1}{t_{j+1}} \right). \\
% \end{split}   
     \bm{\zeta}_{k} \left( \frac{1}{t_{k-1}} \right) - \bm{b}_k \left( \frac{1}{t_{k-1}} - \frac{1}{t_{k}} \right ) - \sum_{j=k}^K \bm{b}_{j+1} \left(\frac{1}{t_j} - \frac{1}{t_{j+1}} \right).
     \label{eq: utility_overbidding}
\end{equation}

\noindent From Equation~\ref{eq: utility_template} and Equation~\ref{eq: utility_overbidding}, the net \ul{increase} in utility is,

\begin{equation}
    \bm{\bar u}_k (\bm{\bar b}_k) - \bm{u}_k (\bm{b}_k) =  \left(\bm{\zeta}_{k} - \bm{b}_k \right) \left( \frac{1}{t_{k-1}} - \frac{1}{t_{k}}\right).
    \label{eq: increase_in_utility}
\end{equation}

\noindent Therefore, bidding $\bm{\bar b}_k > \bm{\zeta}_k \implies \bm{\bar u}_k (\bm{\bar b}_k) - \bm{u}_k (\bm{b}_k) <0$ since $t_{k-1} < t_k$. In other words, overbidding yields negative utility for agent $a_k$. \\

\noindent \textbf{Case 2: Under-bidding ($\bm{\bar b}_k = \bm{b}_{k+1} < \bm{b}_i$):} The new utility in this case is given by,

\begin{equation}
    \bm{\bar u}_k (\bm{\bar \zeta}_k) =  \bm{\zeta}_{k} \left( \frac{1}{t_{k+1}} \right) - \sum_{j=k+1}^K \bm{b}_{j+1} \left(\frac{1}{t_j} - \frac{1}{t_{j+1}} \right)
    \label{eq: utility_underbidding}
\end{equation}

\noindent From Equation~\ref{eq: utility_template} and Equation~\ref{eq: utility_underbidding}, the net \ul{decrease} in utility is,

\begin{equation}
\bm{u}_k (\bm{b}_k) - \bm{\bar u}_k (\bm{\bar b}_k) =   \left(\bm{\zeta}_{k} + \bm{b}_{k+1}\right )  \left( \frac{1}{t_{k}} - \frac{1}{t_{k+1}}\right ).
\label{eq: decrease_in_utility}
\end{equation}

\noindent Note that Equation~\ref{eq: decrease_in_utility} is always positive since $\bm{\zeta}, \bm{b}_{k+1} > 0$ and $t_k < t_{k+1}$. This implies that under-bidding always results in a decrease in utility as well.\\
% \vspace{5pt}
\end{proof}

\subsubsection{Welfare maximization}

The next desired property in an optimal auction is welfare maximization~\cite{roughgarden2016twenty, sayin2018information} which maximizes the total utility earned by every active agent. 
% Social welfare~\cite{roughgarden2016twenty, sayin2018information} is defined as follows,

% \begin{definition}
% \textbf{Welfare Maximization:} Social welfare of an auction outcome is defined as $\sum_i \bm{v}_i\bm{\alpha}_i$. Welfare maximization is the task of finding the strategy $\bm{b}$ that maximizes $\sum_i \bm{v}_i\bm{\alpha}_i$.
% \label{def: social_welfare}
% \end{definition}

% as $\sum_i \bm{v}_i \bm{u}_i(\bm{b}_i)$ which essentially computes the total utility earned by every active agent. We want to maximize this quantity which we verify via the following theorem,

\begin{theorem}
\textbf{Welfare maximization:} Social welfare of an auction is defined as $\sum_i \bm{v}_i\bm{\alpha}_i$. Welfare maximization involves finding the strategy $\bm{b}$ that maximizes $\sum_i \bm{v}_i\bm{\alpha}_i$. For each active agent $a_i \in \mathcal{A}$, bidding $\bm{b}_i = \bm{\zeta}_i$ maximizes social welfare. 
\label{thm: Welfare_Maximizing}
\end{theorem}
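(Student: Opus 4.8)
The plan is to reduce welfare maximization to the classical rearrangement inequality, using the same allocation rule that drove the incentive-compatibility argument. First I would observe that, since social welfare is defined as $\sum_i \bm{v}_i \bm{\alpha}_i$, the objective depends on a strategy profile $\bm{b}$ only through the \emph{allocation} it induces, not through the payments. Because the allocation rule hands the $i^\textrm{th}$ most valuable item $\bm{\alpha}_i$ to the agent submitting the $i^\textrm{th}$ highest bid, every profile $\bm{b}$ induces a permutation $\pi_{\bm{b}}$ of the agents (the one sorting them by decreasing bid), and conversely every permutation of agents to items is realizable by an appropriate choice of bids. Hence maximizing social welfare over all strategies $\bm{b}$ is equivalent to maximizing $\sum_{i} \bm{v}_i \bm{\alpha}_{\pi(i)}$ over all permutations $\pi$.

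Next I would invoke the rearrangement inequality for the fixed decreasing sequence $\bm{\alpha}_1 > \bm{\alpha}_2 > \cdots > \bm{\alpha}_K$ of item values together with the agents' valuations $\bm{v}_i = \bm{\zeta}_i$: the sum $\sum_i \bm{v}_i \bm{\alpha}_{\pi(i)}$ is maximized exactly when $\pi$ pairs the largest valuation with $\bm{\alpha}_1$, the second largest with $\bm{\alpha}_2$, and so on, i.e.\ when agents receive items in sorted order of their valuations. For completeness I would include the one-line exchange argument: if $\pi$ contains an inversion (a pair $i,i'$ with $\bm{v}_i > \bm{v}_{i'}$ but $\bm{\alpha}_{\pi(i)} < \bm{\alpha}_{\pi(i')}$), then swapping the two items changes the objective by $(\bm{v}_i - \bm{v}_{i'})(\bm{\alpha}_{\pi(i')} - \bm{\alpha}_{\pi(i)}) > 0$, so an optimal $\pi$ has no inversions, which forces the sorted assignment.

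Finally I would close the loop by noting that truthful bidding $\bm{b}_i = \bm{\zeta}_i = \bm{v}_i$ induces precisely this sorted permutation: the agent with the largest $\bm{\zeta}$ bids highest and receives $\bm{\alpha}_1$, the next receives $\bm{\alpha}_2$, and so on down the chain $\bm{\zeta}_1 > \bm{\zeta}_2 > \cdots > \bm{\zeta}_K$ used in Equation~\ref{eq: algorithm}. Therefore $\bm{b}_i = \bm{\zeta}_i$ attains $\max_{\bm{b}} \sum_i \bm{v}_i \bm{\alpha}_i$, which is exactly the theorem.

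I do not expect the core argument to be the obstacle; the delicate points are bookkeeping ones. If some valuations or item values coincide, the welfare-maximizing permutation is not unique, so I would remark that any tie-breaking rule yields the same total welfare and truthful bidding remains optimal. One should also be mindful that in the \model~instantiation $\bm{\alpha}_k = 1/t_k$ is tied to the realized turn order rather than to a pre-labelled item; here I would either treat the $t_k$ (hence the $\bm{\alpha}_k$) as exogenous item values, consistent with the SSA template of Section~\ref{subsec: SSA_background}, or observe that an earlier turn deterministically yields a larger time reward $1/t_k$ regardless of which agent occupies it, so the rearrangement argument applies verbatim.
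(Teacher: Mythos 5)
Your proof is correct, and it takes a genuinely different route from the paper's. The paper argues by induction on the number of agents: it notes that $\frac{\bm{\zeta}_1}{t_1}$ is "maximum," assumes the partial sum $\sum_{j=1}^{k}\frac{\bm{\zeta}_j}{t_j}$ is maximal, and then asserts that appending $\frac{\bm{\zeta}_{k+1}}{t_{k+1}}$ (the largest remaining valuation paired with the largest remaining time reward) preserves maximality. That greedy induction never explicitly compares the sorted assignment against an arbitrary competing permutation, so it leaves implicit exactly the step your argument makes precise. Your reduction --- welfare depends on $\bm{b}$ only through the induced permutation, every permutation is realizable by some bid profile, and the rearrangement inequality (via the inversion-swap computation $(\bm{v}_i - \bm{v}_{i'})(\bm{\alpha}_{\pi(i')} - \bm{\alpha}_{\pi(i)}) > 0$) forces the sorted assignment --- is the standard and fully rigorous version of the same underlying idea, and it buys a clean quantifier over \emph{all} alternative allocations rather than just the greedy extension. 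Your closing remarks on ties and on treating $\bm{\alpha}_k = 1/t_k$ as exogenous item values are also worth keeping: the paper silently assumes strict orderings $\bm{\zeta}_1 > \cdots > \bm{\zeta}_n$ and $\frac{1}{t_1} > \cdots > \frac{1}{t_K}$, whereas you handle the degenerate cases explicitly. The only cosmetic mismatch is that the paper indexes agents by $n$ and items by $K$ somewhat interchangeably; aligning your notation with the paper's ($K$ items allocated among $n$ agents, each receiving at most one) would make the correspondence exact.
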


% \noindent We defer the proof to the supplementary material.

\begin{proof}
Our proof is based on induction. We begin with the base case with the most aggressive agent (highest bidder). Recall that after sorting, we have agents in decreasing order of aggressiveness \textit{i.e.} $\bm{\zeta}_1 > \bm{\zeta}_2>\ldots> \bm{\zeta}_n$ and $\frac{1}{t_1} > \frac{1}{t_2}>\ldots> \frac{1}{t_k}$. Therefore, we have that $\frac{\bm{\zeta}_1}{t_1}$ is maximum. Next, consider the hypothesis that the sum $\sum_{j=1}^k\frac{\bm{\zeta}_j}{t_j}$ is maximum up to the $k^\textrm{th}$ highest bidder. Then the inductive step is to prove that $\sum_{j=1}^{k+1} \frac{\bm{\zeta}_j}{t_j}$ is maximum. Observe that,

\[\sum_{j=1}^{k+1} \left( \frac{\bm{\zeta}_j}{t_j}\right) = \sum_{j=1}^k \left(\frac{\bm{\zeta}_j}{t_j}\right) + \frac{\bm{\zeta}_{k+1}}{t_{k+1}}\]

\noindent Note that the first term on the RHS is maximum from hypothesis. Then, 

\begin{equation}
    \begin{split}
        \bm{\zeta}_{k+1} > \bm{\zeta}_{k+2} > \ & \bm{\zeta}_{k+3}>\ldots> \bm{\zeta}_n \\
        &\textrm{and}\\
        \frac{1}{t_{k+1}} > \frac{1}{t_{k+2}}> \ & \frac{1}{t_{k+3}}>\ldots> \frac{1}{t_K}\\
    \end{split}
\end{equation} 

\noindent implies that $\frac{\bm{\zeta}_{k+1}}{t_{k+1}}$ is maximum.\\
% \vspace{5pt}
\end{proof}

\subsubsection{Polynomial time computation}

Finally, in terms of planning and auction design~\cite{roughgarden2016twenty}, it is important to show that the underlying auction is computationally efficient and can handle a large number of agents. We show that our approach runs in polynomial time via the following theorem,

\begin{theorem}
\textbf{Polynomial Runtime:} \model~runs in polynomial time.
\label{thm: Polynomial_Runtime}
\end{theorem}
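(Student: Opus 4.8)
The plan is to decompose the running time of \model{} into its two stages---the behavior modeling phase and the planning (auction) phase---and bound each separately, then add them. For the behavior modeling phase, I would invoke the CMetric computation of Equation~\ref{eq: cmetric}: for each of the $n$ agents one reads a trajectory $\Xi_{\Delta t}$ over a fixed observation window, builds the traffic-graph $\mathcal{G}(\mathcal{V},\mathcal{E})$, and evaluates the centrality function $\Phi^i[t]$ using the recursive update of Equation~\ref{eq: degree}. Since $\Delta t$ is a constant ($5$ seconds) and the graph has at most $n$ vertices, computing each $\bm{\zeta}_i$ costs at most polynomial (in fact low-degree polynomial) time in $n$; doing this for all $n$ agents is therefore polynomial in $n$.

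For the planning phase, the key observation is that by Equation~\ref{eq: algorithm} the optimal ordering $\bm{\sigma}_\textsc{opt}$ is obtained simply by sorting the agents in decreasing order of their behavior profile values $\bm{\zeta}_1 > \bm{\zeta}_2 > \cdots > \bm{\zeta}_n$ and assigning turn $j$ to the agent holding the $j^\textrm{th}$ largest value. Sorting $n$ real numbers takes $\bigO{n \log n}$ time, and reading off the allocation (and, if desired, the induced utilities via Equation~\ref{eq: utility_template}) is an additional linear pass. Hence the planning phase is $\bigO{n \log n}$.

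Combining the two bounds, the total runtime is the sum of a polynomial term from the behavior modeling phase and an $\bigO{n\log n}$ term from the auction, which is itself polynomial in $n$; this establishes the theorem. I would also remark that, unlike DRL- or game-theoretic baselines that solve fixed-point or optimization problems whose cost scales poorly with the number of agents, \model{} avoids any such computation because the allocation rule is an explicit closed form once the $\bm{\zeta}_i$ are known.

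The main obstacle is essentially a modeling/accounting one rather than a deep combinatorial one: the bound on the behavior modeling phase depends on treating the observation window $\Delta t$ (and hence the number of trajectory samples and the neighborhood radius $\mu$) as fixed constants, and on the traffic-graph being of size polynomial in $n$. I would be careful to state these as standing assumptions (they follow from the problem setup in Section~\ref{subsec: cmetric_background}), since otherwise the recursive temporal-memory update in Equation~\ref{eq: degree}, which accumulates over all past timesteps, could in principle contribute a factor depending on the horizon. Once that is pinned down, the argument is routine.
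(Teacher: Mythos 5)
Your proposal is correct and follows essentially the same route as the paper, whose entire proof is the observation that the computation is dominated by sorting the agents' CMetric values, which runs in polynomial time. Your version is simply more careful: it additionally accounts for the cost of the behavior-modeling phase (which the paper's one-line argument leaves implicit) and correctly flags that this requires treating the observation window and traffic-graph size as polynomially bounded in $n$.
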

\begin{proof}
The main computation in our algorithm is dominated by sorting the agent's CMetric values; it is known that sorting algorithms run in polynomial time~\cite{clrs}.
\end{proof}

\subsection{Using $\sigma_{\textsc{opt}}$ for collision prevention and deadlock resolution}

We identify a deadlock as a situation when two or more active traffic-agents remain stationary for an extended period of time due to the uncertainty in the actions of other active traffic-agents. Deadlocks may arise in traffic scenarios consisting of multiple conservative and/or aggressive agents and are resolved when one of the agents opts to move based on some heuristic. Via $\bm{\sigma}_{\textsc{opt}}$, agents automatically know when each agent is supposed to move thereby eliminating any confusion or uncertainty in the actions of other agents.

\model~can also prevent collisions in a similar manner. The number of collisions, or the likelihood thereof, increases when two or more aggressive agents decide to \texttt{MOVE} first, simultaneously, despite the uncertainty in the actions of the other agents. $\bm{\sigma}_{\textsc{opt}}$ can break ties between multiple aggressive drivers since $\bm{\zeta}_i \neq \bm{\zeta}_j$ for $i,j \in [1,n]$. Using the turn-based ordering determined by $\bm{\sigma}_{\textsc{opt}}$, less aggressive agents can let more aggressive agents pass first (as shown in Figure~\ref{fig: cover}).

\section{Analysis of different bidding strategies}
\label{sec: np-complete}
\begin{table*}[t]
{\fontsize{9pt}{9pt}\selectfont
% \newcolumntype{Z}{S[table-format=2.2,table-auto-round]}
\caption{\textbf{Using different bidding strategies:} We compare the rate of collisions by replacing the CMetric bidding strategy with economic auctions, first-in-first-out (FIFO), and random bidding in a range of simulation settings that include changing the traffic density, number of aggressive drivers, likelihood of a driver being aggressive, and speed limit.}
\label{tab: auctions}
\centering
% \setlength{\tabcolsep}{3mm}
% \ra{1.05}
% \Large
% \begin{adjustbox}{max width=\textwidth}
\fontsize{8}{10}\selectfont{\resizebox{\textwidth}{!}{
\begin{tabular}{@{}ccccc|cc|cccc|cc@{}}
  \toprule
%   \multirow{4}[2]{*}{Methods}&&&&&&$4-$way Intersection&&&&&&\\ 
%   \cmidrule[1pt](l{3mm}r{3mm}){3-17
%   } 
  \multirow{2}{*}{Strategy} &
  \multicolumn{4}{c}{\% Aggressive}&
  \multicolumn{2}{c}{\# Agg. Agents}& 
  \multicolumn{4}{c}{\# Agents \fontsize{7}{8}\selectfont{(w. 1 agg. agent)}} & 
  \multicolumn{2}{c}{Max. Speed} \\
  \cmidrule{2-5} 
  \cmidrule{6-7} 
  \cmidrule{8-11}
  \cmidrule{12-13}
   & $20\%$ & $25\%$ & $33\%$ & $50\%$ &  $1$ & $2$  &  $4$ & $6$ & $8$ & $10$ & $45$mph & $60$mph \\
%   \midrule
%   \midrule
\midrule
%  &&&&&&$4-$way Intersection Groups&&&&&&\\ 
%   \midrule

  Economic & 
    $2.48$ & $2.99$ & $6.55$ &
    $8.29$& $2.81$ & $6.18$ & 
    $3.07$ & $3.27$ & $3.58$ & $3.62$ &$2.99$& $3.07$ \\ 
  FIFO & 
    $7.18$ & $8.69$ & $10.20$ & 
    $15.87$& $7.65$ & $15.06$ & 
    $7.59$ & $8.36$ & $8.65$ & $8.91$ & $7.72$ & $7.47$ \\ 
  Random  & 
    $14.79$ & $18.67$ & $25.24$ &
    $37.77$ & $15.01$ & $28.91$ & 
    $14.82$ & $16.69$ & $17.59$  & $17.90$& $15.01$&$15.01$\\ 
  \textbf{\model}  & 
     $\textbf{0.33}$ & $\textbf{0.37}$ & $\textbf{0.63}$ & 
    $\textbf{0.79}$& $\textbf{0.29}$ & $\textbf{0.66}$ & 
    $\textbf{0.12}$ & $\textbf{0.24}$ & $\textbf{0.37}$  & $\textbf{0.64}$& $\textbf{0.44}$&$\textbf{0.53}$\\ 

  \bottomrule
\end{tabular}}
}
}
% \end{adjustbox}
% \vspace{-6mm}
  \vspace{-5pt}
\end{table*}
\begin{table}[t]
% \newcolumntype{Z}{S[table-format=2.2,table-auto-round]}
\caption{\textbf{Comparing with other approaches:} We compare the percentage \hllb{(\%)} of collisions (C), deadlocks (D), and overall success rate (S) with state-of-the-art planning methods designed for uncontrolled and unsignaled intersections. The \xmark~symbol indicates that results for that setting were not provided by the method.
}
\centering
% \setlength{\tabcolsep}{3mm}
% \ra{1.05}
% \Large
% \begin{adjustbox}{max width=\textwidth}
\resizebox{\columnwidth}{!}{
\begin{tabular}{rccccccccc}
  \toprule[1.5pt]
%   \multirow{4}[3]{*}{Methods}&&&&$4-$way intersection&&&&&\\ 
%   \cmidrule[1pt](l{3mm}r{3mm}){3-13}   &
% \midrule
 \multirow{2}{*}{Methods}& \multicolumn{3}{c}{$2$ Vehicles} &
  \multicolumn{3}{c}{$3$ Vehicles}&  
  \multicolumn{3}{c}{$4$ Vehicles}\\
%   \cmidrule{3-5} 
%   \cmidrule(l{3mm}r{3mm}){7-9} 
%   \cmidrule(l{3mm}r{3mm}){11-13}
   & {C } & {D } & S  &  {C } & {D } & S  &  {C } & {D } & S \\
%   \midrule
%   \midrule
\midrule
%  &&&&&&$4-$way Intersection Groups&&&&&&\\ 
%   \midrule

  Capasso et al.~\cite{capasso2021end} & 
    \xmark & \xmark & \xmark & 
    $02$ & \xmark & \xmark & 
    $0$ & \xmark & \xmark  \\ 
  Liu et al.~\cite{liu2020decision} & 
    \xmark & \xmark & \xmark & 
    \xmark & \xmark & \xmark & 
    $6.2$ & \xmark & $93.7$  \\ 
  Isele et al.~\cite{isele2018navigating} & 
    \xmark & \xmark & \xmark & 
    \xmark & \xmark & \xmark & 
    $7.1$ & $0.2$ & $92.8$  \\ 
Kai et al.~\cite{Kai2020AMR} & 
    $0$ & $0$ & $100$ & 
    \xmark & \xmark & \xmark & 
    $3.7$ & $0 $ & $96.3$  \\ 
    
  Li et al.~\cite{li2020game} & 
    $0$ & $0$ & $100$ & 
    \xmark & \xmark & \xmark & 
    $2.7$ & $0.4$ & $96.9$  \\ 
  Roh et al.~\cite{roh2020multimodal} &
    $2$ & \xmark & $98$& $2$ & \xmark & $98$ & $30$ & \xmark & $70$   \\ 
  Tian et al.~\cite{tian2020game} &
    $10$ & $5$ & $85$ & 
    $10$ & $5$ & $85$ & 
    $10$ & $10$ & $80$  \\ 
  \model  & 
    $\textbf{0}$ & $\textbf{0}$ & $\textbf{100}$ & 
    $\textbf{0}$ & $\textbf{0}$ & $\textbf{100}$ & 
    $\textbf{0.1}$ & $\textbf{0.1}$ & $\textbf{99.8}$  \\ 
  \midrule
  
%  \midrule
%   \midrule
%     \multirow{4}[3]{*}{Methods}&&&&$ 4-$way intersection&&&&&\\ 
%   \midrule
  \multirow{2}{*}{Methods}&
  \multicolumn{3}{c}{$6$ Vehicles}& 
  \multicolumn{3}{c}{$8$ Vehicles}&  
  \multicolumn{3}{c}{$10$ Vehicles}\\
%   \cmidrule(l{3mm}r{3mm}){3-5} 
%   \cmidrule(l{3mm}r{3mm}){7-9} 
%   \cmidrule(l{3mm}r{3mm}){11-13}
   & {C } & {D } & S  &  {C } & {D } & S  &  {C } & {D } & S \\
  \midrule
    Capasso et al.~\cite{capasso2021end} & 
    \xmark & \xmark & \xmark &
    $03$ & \xmark & \xmark &
    \xmark & \xmark & \xmark  \\
  Li et al.~\cite{li2020game} &
    $10$ & $10$ & $90$ &
    $20$ & $20$ & $80$ &
    $20$ & $20$ & $80$  \\ 
  Tian et al.~\cite{tian2020game} &
    $10$ & $10$ & $80$ &
    $10$ & $10$ & $80$ &
    $20$ & $20$ & $60$  \\ 
    
    \midrule
  \model  &
    $\textbf{0.3}$ & $\textbf{0.1}$ & $\textbf{99.6}$ &
    $\textbf{0.4}$ & $\textbf{0.1}$ & $\textbf{99.5}$ &
    $\textbf{0.6}$ & $\textbf{0.2}$ & $\textbf{99.2}$  \\

  \bottomrule[1.5pt]
\end{tabular}
}
% \end{adjustbox}
% \vspace{-6mm}
\label{tab: accuracy}
  \vspace{-10pt}
 
\end{table}}

At this point, there are two concerns that may come to the mind of the reader:

\begin{enumerate}
    \item Justification of using driver behavior as the bidding strategy as opposed to using monetary budgets, arrival times, or any random driver-specific property, \textit{e.g.} height.
    
    \item Since the optimal solution in \model~is to reward more aggressive drivers, conservative agents may be motivated to act aggressively.
\end{enumerate}

\noindent In the following, we formally address both issues.
% In Section~\ref{subsec: alternate_strat}, 

% we show that in economic auctions, random bidding (using arbitrary driver properties \textit{e.g.} height), and FIFO-based strategies, agents actually gain more utility by over-bidding implying that such auctions are not incentive-compatible whereas behavior-based bidding is game-theoretically optimal (Section~\ref{subsec: optimality}).

% In Section~\ref{subsec: behavior_and_conservative}, we provide an insight into why 

\subsection{Justifying behavior-based bidding for human drivers with social preferences}
\label{subsec: alternate_strat}

The key insight to achieving an optimal auction is that agents' bidding strategy ($\bm{b}$) should be correlated with their social preferences or private valuations ($\bm{v}$). That is,

\begin{equation}
\begin{split}
    \bm{b}_1 > \bm{b}_2& > \ldots > \bm{b}_K\\
    &\iff\\
    \bm{v}_1 > \bm{v}_2& > \ldots > \bm{v}_K\\
\end{split}
\end{equation}

\noindent \hllb{where $\iff$ denotes equivalency} and with optimality occurring when $\bm{b}_i = \bm{v}_i = \bm{\zeta}_i$. In the case of behavior-based bidding, we explicitly enforce the above constraints via \model. Economic auctions, random bidding, and FIFO-based strategies, on the other hand, disregard the aspect of human social preference ($\bm{v}$). A simple analysis reveals that ignoring private valuations can result in auctions where agents are encouraged to falsify their bids.

% As before, let $\bm{b}_1 > \bm{b}_2 > \ldots > \bm{b}_K$ and $\bm{\alpha}_1 > \bm{\alpha}_2 > \ldots > \bm{\alpha}_K$ denote the order of bids and associated time-rewards for agents $a_1, a_2, \ldots, a_K$. The nature of bids depend on the particular mechanism; they represent money in the case of economic auctions and time-to-intersection (or roundabout) in the case of FIFO-based auctions.
Consider any economic auction~\cite{carlino2013auction}. Let the cash bids made by agents be $\bm{b}_1 > \bm{b}_2 > \ldots > \bm{b}_K$ with associated time rewards, $\bm{\alpha}_1 > \bm{\alpha}_2 > \ldots > \bm{\alpha}_K$. Now assume a single agent $a_k$ is an impatient driver with a budget $B_k$ and bid $\bm{b}_k \leq B_k$ ($B_k < B_m$ where $m$ indexes those agents whose bid $\bm{b}_m > \bm{b}_k$). Since economic auctions do not take social preferences or private valuations into account, it can be shown that $\bm{v}_k > \bm{b}_k$ for the impatient agent $a_k$. To see how, recall that $a_k$'s private valuation $\bm{v}_k$ is much higher than the agents before them. That is, $\exists m<k$ such that $\bm{v}_k > \bm{v}_m$. Without loss of generality, assume $m=k-1$. Then we have that $\bm{v}_1 > \ldots>\bm{v}_k> \bm{v}_{k-1}>\ldots> \bm{v}_K$. Since $\bm{v}_k>\bm{v}_{k-1}$ but $\bm{b}_{k-1}>\bm{b}_k$ and $\bm{v}_{k-1} = \bm{b}_{k-1}$, $\bm{v}_k > \bm{b}_{k}$ and $a_k$ can decide to increase her bid to $\bm{\bar b}_k = \bm{b}_{k-1} + \epsilon >\bm{b}_{k-1}$. The updated utility for $a_k$ is,

\begin{equation*}
% \begin{split}
    \bm{\bar u}_k (\bm{\bar b}_k) = \bm{v}_k \left( \frac{1}{t_{k-1}} \right ) - \sum_{j=k-1}^K \bm{b}_{j+1} \left(\frac{1}{t_j} - \frac{1}{t_{j+1}} \right).
% \end{split}    
\end{equation*}

\noindent The difference from original utility is,

\begin{equation}
% \begin{split}
    % &\bar u_k (\bar \bm{b}_k) = \\
    \bm{d} = \left( \bm{v}_k - \bm{b}_k \right)  \left( \frac{1}{t_{k-1}} - \frac{1}{t_{k}} \right ).
    \label{eq: difference}
% \end{split}    
\end{equation}

\noindent Since $\bm{v}_k > \bm{b}_k$ and $t_{k-1} < t_{k}$, the net change is positive. Therefore, agents gain utility by over-bidding. This implies that economic and FIFO-based auctions are not optimal. We present empirical evidence in Section~\ref{subsec: comparison_with_prior_methods}.

\subsection{Behavior-based bidding does not encourage conservative agents to act aggressively}
\label{subsec: behavior_and_conservative}
At first glance, it may seem that a similar argument can be made against our approach wherein it is biased towards aggressive drivers by rewarding them with higher priority, thereby encouraging conservative drivers to be more aggressive (over-bidding). However, this is not the case since in the case of behavior-based bidding, acting aggressively and therefore, over-bidding $\bm{b}_k > \bm{v}_k \implies \bm{d} < 0$ by Equation~\ref{eq: difference}. In other words, conservative drivers lose utility by acting aggressively. \hllg{This is because in the event of an overbid, the gain in time reward, $\bm{\zeta}_{k}\left( \frac{1}{t_{k-1}} - \frac{1}{t_{k}}\right)$ is \textit{less} than the risk penalty, $\bm{b}_{k}\left( \frac{1}{t_{k-1}} - \frac{1}{t_{k}}\right)$. Special cases include when two or more agents decide to over bid at the same time. The driver behavior modeling algorithm, CMetric, automatically takes care of these issues as it outputs a real number ($\mathbb{R}$). In the unlikely event that the CMetric output for multiple vehicles be identical, we choose to break ties randomly.}

\section{Experiments and Results}
\label{sec: experiments_and_results}

We compare our approach with state-of-the-art planning methods designed for unsignaled traffic scenarios. We also demonstrate \model~in two real-world merging scenarios involving human drivers. Additional results \hllb{from} roundabouts and merging can be found in the supplementary report~\hllb{\mbox{\cite{Supp_Report}}}.

\subsection{Experiment setup}
\label{subsec: experiment_setup}

% We reproduce the results of Wang et al.~\cite{wang2020game} using the simulator provided therein. The resulting analysis, however, differs from the one conducted by~\cite{wang2020game} in one key aspect- instead of a synthetically chosen risk-sensitivity parameter, \model~computes the behavior profiles of human drivers using CMetric. Ultimately, in this section, we show that the results of Wang et al.~\cite{wang2020game}, which were originally conducted for single-agent scenarios with dynamic obstacles, can be used to validate multi-agent planning with human drivers in the loop.

We apply \model~in three scenarios-- intersection, merging, and roundabout. We use the OpenAI traffic simulator~\cite{leurent2019social} for computing the behavior profiles using CMetric during an observation period of $5$ seconds with a traffic density of $20$ vehicles in a $4800 m^2$ area. 
% We use the controller provided by Wang et al.~\cite{wang2020game} to implement the SSA auction algorithm.
% To make CMetric compatible with this controller, we project the CMetric values to the behavior scale used by the controller, which ranges from $-5.0$ (conservative) to $+5.0$ (aggressive). Although we demonstrate all results using two active agents for better visualization, our method can scale to $n$ active agents. 
% We use the Time-to-Goal (TTG) to measure the time taken to successfully navigate the traffic scenario. A lower TTG indicates a more efficient planner.
We average the number of collisions and deadlocks over the total number of simulations.
% We evaluate the behavior profile, $\zeta_i$, of both agents using CMetric~\cite{cmetric} during

% \subsection{Evaluation metrics}
% \label{subsec: evaluation_metrics}

% \begin{enumerate}[noitemsep]
%     \item Time-to-Goal (TTG): This metric computes the time taken to successfully navigate the traffic scenario. A lower TTG indicates a more efficient planner.
    
%     \item Collision/Deadlock Rate (CR): We measure the number of collisions and deadlocks averaged over the total number of simulations.
% \end{enumerate}

% \subsection{Baselines for uncontrolled and unsignaled traffic intersections }
% \label{subsec: baselines}

We compare \model~with a wide range of techniques developed for navigating unsignaled intersections, roundabouts, and merging scenarios which were discussed in Section~\ref{sec: related_work} and Table~\ref{tab: bidding_rules}. The methods based on deep learning, game theory, and DRL do not have publicly available open-sourced code; we compared our approach by matching their configuration using the exact same motion models and dynamic parameters (using the IDM model with identical velocity and acceleration constraints, equal number of lanes, similar traffic density etc.). We implemented the methods based on FIFO, economic auctions, and random bidding within our own simulation framework; these methods were easily reproducible due to their similarity to our approach.

\begin{figure*}[t]
\centering
% \resizebox{9\linewidth}{!}{
    % \begin{subfigure}[h]{.6\linewidth}
    \begin{subfigure}[h]{\linewidth}
    \includegraphics[width=\textwidth]{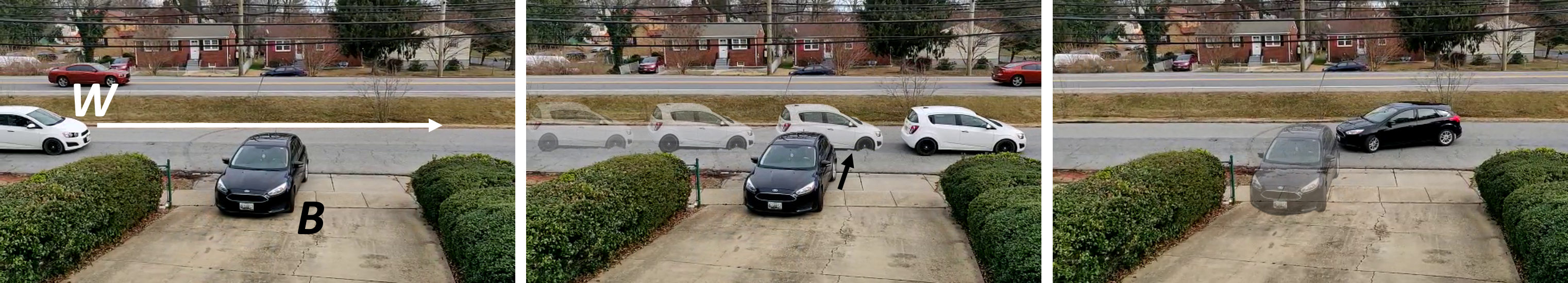}
    \caption{$\sigma_W = 1, \sigma_B = 2$, TTG $= 7$ seconds.}
    \label{fig: white_black}
    \vspace{2pt}
  \end{subfigure}
  \begin{subfigure}[h]{\linewidth}
    \includegraphics[width=\textwidth]{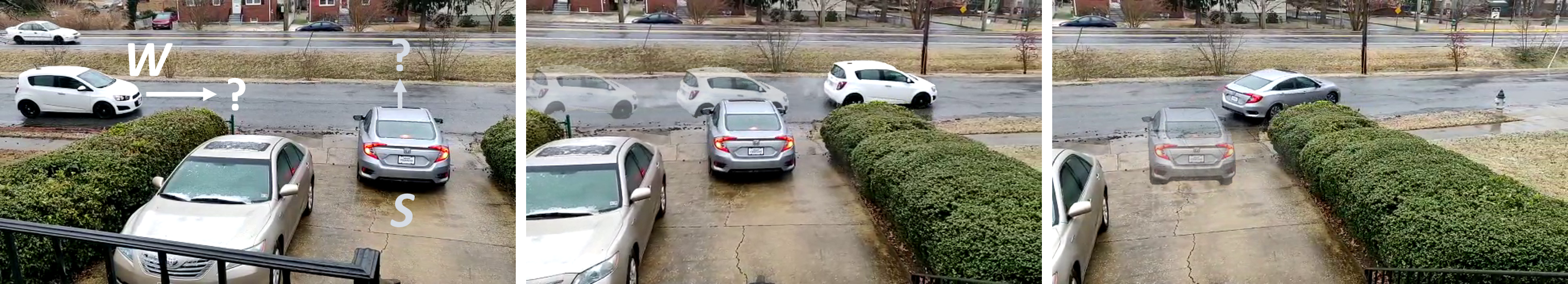}
    \caption{$\sigma_W = \sigma_S = \textrm{\xmark}$, TTG $= 15$ seconds.}
    \label{fig: white_silver}
    % \vspace{5pt}
  \end{subfigure}
  %
%     \begin{subfigure}[h]{0.32\linewidth}
%     \includegraphics[width=\textwidth]{img/real_world/wb3.png}
%     \caption{B follows next.}
%     \label{fig: argo3}
%     \vspace{5pt}
%   \end{subfigure}
% %
%     \begin{subfigure}[h]{0.32\linewidth}
%     \includegraphics[width=\textwidth]{img/real_world/ws1.png}
%     \caption{S and W both stop.}
%     \label{fig: argo1}
%   \end{subfigure}
%   %
%   \begin{subfigure}[h]{0.32\linewidth}
%     \includegraphics[width=\textwidth]{img/real_world/ws2.png}
%     \caption{W moves first.}
%     \label{fig: argo2}
%   \end{subfigure}
%   %
%     \begin{subfigure}[h]{0.32\linewidth}
%     \includegraphics[width=\textwidth]{img/real_world/ws3.png}
%     \caption{S follows next.}
%     \label{fig: argo3}
%   \end{subfigure}
%   \end{subfigure}
  %
%   \begin{subfigure}[h]{0.35\linewidth}
%     \includegraphics[width=\textwidth]{img/real_world/real_world.png}
%     \caption{}
%     \label{fig: argo3}
%   \end{subfigure}
%
  %
%   }
\caption{\textbf{Demonstrating \model~with Human Drivers:} In Figure~\ref{fig: white_black}, the driver of the white (W) vehicle is aggressive while the driver of the black (B) vehicle is conservative. \model~computes a turn-based ordering that determines the black agent should yield to the white vehicle, which would pass first. In Figure~\ref{fig: white_silver}, we do not perform game-theoretic planning and consequently, both drivers are unsure of who should move first. In this case, both the white (W) and silver (S) agents stop and after a delay of $8$ seconds, the white vehicle moves first (due to the inherently aggressive nature of the driver) followed by the silver vehicle.}
  \label{fig: real_world}
  \vspace{-5pt}
\end{figure*}

\subsection{Comparison with prior methods}
\label{subsec: comparison_with_prior_methods}

% We compare our approach with state-of-the-art planning methods using for uncontrolled and unsignaled traffic intersection (we could not include results on roundabouts and merging since prior methods have not evaluated their approaches in those scenarios), and demonstrate the results in Table~\ref{tab: accuracy}.
% \input{Tables/comp_auctions}

\subsubsection{Auctions} In Table~\ref{tab: auctions}, we compare our method in terms of collision rate with three auction formats using different bidding strategies--monetary or economic bidding, time-based bidding (FIFO), and random bidding. Our scenario consists of a $4$-way multi-lane intersection consisting of $4-10$ vehicles. Using the OpenAI simulator~\cite{leurent2019social}, we program agents to assume a particular behavior (conservative versus aggressive). We test in a wide range of configurations including the likelihood of a driver being aggressive, number of aggressive drivers (one versus two), total traffic density, and maximum speed (MS). The central argument we make in Section~\ref{subsec: alternate_strat} is that economic and FIFO-based auctions as well as random bidding strategies result in collisions when taking human preference or valuation ($\bm{v}$) into account. In Table~\ref{tab: auctions}, we show that our behavior-based auction significantly outperforms prior auctions when considering human preference.

We infer some additional results from Table~\ref{tab: auctions}. The ``\% Aggressive'' column corresponds to the likelihood of an instantiated driver being aggressive. For example, $25\%$ aggressiveness implies that $1$ in $4$ agents created by the simulator will be programmed as an aggressive agent. It is not surprising to observe that the rate of collision increases with the likelihood of aggressiveness. A similar reasoning can be made for the number of aggressive agents simulated, number of total agents, and the maximum speed allowed. Regarding the last factor, we program an agent to be aggressive if their speed exceeds the maximum speed at the moment it reaches an intersection. However, variation in the speed limits do not impact the collision-rate. Finally, we observe that economic auctions outperform FIFO and random bidding strategies.

\subsubsection{DRL, game theory, RNNs} To thoroughly evaluate our approach with respect to the state-of-the-art, we also compare \model~with alternatives to auctions namely approaches based on deep learning, DRL, and game theory in terms of collision rate (C), deadlock rate (D), and success rate (S) in Table~\ref{tab: accuracy}. Again, the scenario used for evaluation is a $4$-way multi-lane intersection because all the baselines use $4$-way intersections in their experiments as well. In order to convey a fair and uniform comparison, we identify a common configuration across all baselines. This configuration includes static environment features such as traffic density, speed limits, and number of lanes along with dynamic parameters such as velocity and acceleration constraints, lane-changing model, and motion and braking models. A few consequences of maintaining such a uniform configuration is that results for many settings are unavailable (marked with \xmark).

One caveat is that since these baselines do not guarantee collision-free navigation, the results in Table~\ref{tab: accuracy} are not surprising. The key benefit of our approach over these methods is that our results are game-theoretically optimal for $n$ agents in a $4-$way traffic intersections. That is, as $n$ increases, our approach can scale accordingly and we can still guarantee $0$ collisions and deadlocks from a planning standpoint. We observe that the number of collisions increases for the approaches of Li et al.~\cite{li2020game} and Roh et al.~\cite{roh2020multimodal} as the number of agents and service lanes in the intersections increase. 

\subsubsection{Computational time comparison} In terms of computational time~\mbox{\cite{vasirani2012market, censi2019today, lin2021pay}} are not game-theoretically optimal and therefore do not operate in polynomial time (Theorem~\mbox{\ref{thm: Polynomial_Runtime}}). However, auctions such as~\mbox{\cite{carlino2013auction, rey2021online, sayin2018information}} are both incentive compatible and run in polynomial time. For these last three works, despite having a similar computational time cost as our approach, our approach results in fewer collisions and deadlocks with aggressive and impatient drivers. Furthermore, we also compare our method with Kai et al.~\mbox{\cite{Kai2020AMR}} in terms of computational time in real world application. Kai et al.~\mbox{\cite{Kai2020AMR}} use multi-task deep reinforcement learning which takes on average $\bm{12}$ seconds to turn at an intersection. Our method, in comparison, takes $\bm{7}$ seconds to turn at the intersection.

\subsection{Demonstrating \model~with human drivers}
\label{subsec: real_world_exps}

A major distinction of our work from prior methods is the ease with which it can be applied in real-world scenarios. We demonstrate \model~in a merging scenario where one vehicle merges onto the road from a driveway (Figure~\ref{fig: real_world}). In this experiment, we recruit three participants driving a white, silver, and black vehicle, respectively. Each participant was given a goal and a set of actions. For example, the white vehicle is asked to drive straight and reach a final destination. The black and silver vehicles are asked to merge onto the road and turn right. Recall that each participant is uninformed about the actions of other agents. In fact, we do not tell participants that other participants are involved in the scenario. In other words, each participant is made aware of the other only after the interaction has started.

We assume that the behavior profiles of all participants are given or we can compute them using their trajectories. Each participant is asked to drive a fixed path with other traffic agents. We use the CMetric algorithm to compute the $\bm{\zeta}$ values for each participant and ask them to privately verify the value obtained by the CMetric algorithm. Post verification, we recorded the drivers of the white, black, and silver vehicles to be aggressive, conservative, and neutral, respectively. The scenario consists of the white vehicle approaching from the left towards the end of the driveway while the black and silver vehicles exit the driveway and merge onto the road. We record the interactions and time-to-goal (TTG) of the three agents with and without using \model. The white and black agents participate in the former, while the white and the silver agent participate in the latter.

When agents using \model~to plan their strategies, the white agent is aggressive and knows that the black agent is conservative. Therefore, the white agent is incentivised to continue moving forward and the black agent yields as expected. Next, we compare the interaction between the white and the silver agents without \model. Recall that the white agent does not know the strategy of the silver agent and in this case, is also unaware of the behavior profile of the silver agent, and vice-versa. Therefore, both agents stop at the intersection, entering into a deadlock. This is sub-optimal as it leads to a TTG delay of $8$ seconds. Note that since the white agent is inherently a more aggressive driver, they are first to resume moving after the delay.

% \subsection{Validating the Behavior Model}
% \label{subsec: validating_cetric}

% A major assumption made by our approach is the availability of a reliable driver behavior prediction model. We use the CMetric algorithm for behavior prediction~\cite{cmetric}. 
% %Although validating the CMetric algorithm is not a focus of this paper, we summarize some of the validation and evaluation results from these prior works.
% In~\cite{cmetric}, the authors introduce a new metric to validate the CMetric algorithm called the Time Deviation Error (TDE). The TDE measures the temporal difference between the time-stamps when a human identifies an aggressive behavior and when that same behavior is modeled using CMetric. For example, if a vehicle executes a rash overtake at the $8^\textrm{th}$ video frame and CMetric predicts the behavior at the $10^\textrm{th}$ frame, then the TDE$=0.1$ seconds, assuming $20$ frames per second. A lower value for TDE indicates a more accurate behavior prediction model. CMetric has an average TDE of less than $1$ second. In~\cite{chandra2019graphrqi}, the authors combine CMetric with machine learning and use classification accuracy as an evaluation metric. This metric is simply measured as the  fraction of correctly predicted behaviors, weighted by class frequencies. CMetric achieves a weighted classification accuracy of up to $90\%$. We refer the reader to the original papers on CMetric for further details regarding the experimentation.

\section{Conclusion, Limitations, and Future Work}
\label{sec: conclusion}

We present a novel multi-agent game-theoretic planning algorithm called \model~in intersections, roundabouts, and during merging with human drivers and autonomous vehicles. \model~uses the behavior profiles of all traffic-agents, combines with sponsored search auctions, and produces an optimal turn-based ordering. We show that \model~is incentive compatible, welfare maximizing, and operates in polynomial time. We reduce the number of collisions and deadlocks by at least $10-20\%$ on average over prior methods. Moreover, we demonstrate \model~in two merging scenarios involving real human drivers and show that our game-theoretic model is applicable in real-world scenarios.

There are a few limitations of our work. Our approach is primarily designed for moderately to highly dense traffic as CMetric~\cite{cmetric} may not work as well in sparse traffic conditions. In such cases, data-driven behavior models such as SVO~\cite{schwarting2019social} may be used. There are many interesting directions for future work. For example, our method currently does not plan beyond computing turn-based orderings, i.e. local navigation. Next steps may include integrating \model~with global motion planning methods to achieve an end-to-end navigation approach for non-communicating multi-agent traffic scenarios. \hllb{In addition, we have currently demonstrated real world application with $2-3$ vehicles. In the future, we plan to conduct further evaluation in denser and more comprehensive real world settings with more vehicles.}

{\footnotesize \bibliography{refs}}
\bibliographystyle{plain}

\end{document}